\documentclass[runningheads]{llncs}
\usepackage[T1]{fontenc}
\usepackage{graphicx,verbatim}
\usepackage{algorithm}
\usepackage{algorithmic}
\usepackage{amsmath,amssymb,amsfonts}
\usepackage{algorithmic}
\usepackage{booktabs}       
\usepackage{nicefrac}       
\usepackage{microtype}      
\usepackage{xcolor}         
\usepackage{soul,color}
\usepackage{multirow}
\usepackage{rotating}
\usepackage{vcell}
\usepackage{amssymb}
\usepackage{hhline}
\usepackage[shortlabels]{enumitem}
\usepackage{listings}
\usepackage{float}
\usepackage{wrapfig}
\usepackage{pifont}
\usepackage[normalem]{ulem}
\usepackage{arydshln}
\usepackage{colortbl}
\usepackage{tcolorbox}
\usepackage{minted}
\usepackage{mathtools}
\usepackage{makecell}
\usepackage[rightcaption]{sidecap}
\usepackage{capt-of}
\usepackage[pagebackref,breaklinks,colorlinks]{hyperref}
\begin{document}
\title{Self-Auditing
Parameter-Efficient Fine-Tuning\\for Few-Shot
3D Medical Image Segmentation}

\author{Son Thai Ly \and Hien V. Nguyen}  
\authorrunning{Son Thai Ly \and Hien V. Nguyen}
\institute{University of Houston \\
    \email{stly@cougarnet.uh.edu $\quad$ hvnguy35@central.uh.edu}}
\maketitle            
\begin{abstract}
Adapting foundation models to new clinical sites remains challenging in practice. Domain shift and scarce annotations must be handled by experts, yet many clinical groups do not have ready access to skilled AI engineers to tune adapter designs and training recipes. As a result, adaptation cycles can stretch from weeks to months, particularly in few-shot settings. Existing PEFT methods either require manual adapter configuration or automated searches that are computationally infeasible in few-shot 3D settings. We propose \textbf{SEA-PEFT} (\textit{self-auditing parameter-efficient fine-tuning}) to automate this process. SEA-PEFT treats adapter configuration as an online allocation problem solved during fine-tuning rather than through manual, fixed-topology choices. SEA-PEFT uses a \textit{search--audit--allocate} loop that trains active adapters, estimates each adapter's Dice utility by momentarily toggling it off, and then reselects the active set under a parameter budget using a greedy knapsack allocator. Exponential Moving Average and Interquartile Range smoothing, together with a Finite-State Ranking controller, stabilize the loop and improve reliability in high-noise few-shot regimes. On TotalSegmentator and FLARE'22, SEA-PEFT improves mean Dice by 2.4--2.8 points over the strongest fixed-topology PEFT baselines across 1/5/10-shot settings while training $<$1\% of parameters. For reproducibility purposes, we made our code publicly available at \url{https://github.com/tsly123/SEA_PEFT}
\keywords{Parameter-efficient fine-tuning $\cdot$
Medical image 3D segmentation $\cdot$ Few-shot learning $\cdot$
Foundation models $\cdot$ Adapter selection}

\end{abstract}

\section{Introduction}
Recent foundation models (FMs) for 3D medical imaging, including Swin-UNETR \cite{hatamizadeh2021swin}, SuPreM \cite{li2024well}, and FSEFT \cite{silva2023towards}, have produced strong results in organ segmentation across CT and MRI. Adapting these models to a new clinical site is nevertheless non-trivial. Hospitals vary in scanner protocols, reconstruction kernels, and patient demographics, and this distribution shift requires site-specific fine-tuning. At the same time, volumetric mask annotation is labour-intensive, most sites lack dedicated AI staff, and 3D patch training imposes significant memory requirements. As a result, fine-tuning a medical FM in a new hospital typically requires an AI engineer, labelled data, and careful configuration choices that are unavailable in most clinical settings.

Parameter-Efficient Fine-Tuning (PEFT) \cite{lora,adaptformer,bias1,adapter1,vpt} reduces the number of updated parameters and is therefore a natural candidate for low-resource clinical adaptation. The problem is that PEFT still requires three configuration decisions: which adapter type to use, where in the transformer to insert it, and what rank or bottleneck dimension to assign. These choices interact and depend on the target organ, dataset size, and acquisition protocol, so there is no universally good default. Offline search methods such as NOAH \cite{zhang2024neural}, AutoPEFT \cite{zhou2024autopeft}, and Fairtune \cite{dutt2023fairtune} can find good configurations but require multiple end-to-end fine-tuning runs, which is impractical for few-shots with high-resolution 3D volumes.

We propose to solve the PEFT configuration problem online, during fine-tuning, using direct task-level feedback rather than weight-space proxies. The core idea is simple: to assess whether an adapter is useful, temporarily remove it and measure the change in performance on a held-out validation set. This on/off perturbation gives an unbiased, model-agnostic estimate of each adapter's marginal performance contribution. Repeating this measurement across training, smoothing the estimates, and greedily selecting adapters under a parameter budget yields a configuration that improves as more audits accumulate, without requiring any offline sweep.

We proposed framework \textbf{SEA-PEFT} (\underline{SE}lf-\underline{A}uditing PEFT). The search--audit--allocate loop trains active adapters for $K$ steps (\textbf{Search}), estimates per-adapter utility via on/off toggling (\textbf{Audit}), and selects the active set under a global budget (\textbf{Allocate}). An Exponential Moving Average (EMA) $+$ Interquartile Range (IQR) utility tracker stabilizes noisy audit estimates, and a Finite-State Machine (FSM) prevents premature adapter switching by requiring several consistent votes before committing a configuration change. After the loop converges, a guard-free knapsack re-solve selects the final configuration, which is then fine-tuned from scratch so that the reported model carries no exploratory weights from the search phase. \textbf{Contributions}: (\textit{i}) We propose SEA-PEFT, an online PEFT configuration framework for few-shot 3D medical segmentation that selects adapter type, placement, and rank via task-level perturbation auditing during fine-tuning, with no offline search and no manual configuration; (\textit{ii}) The novel couples with EMA$+$IQR utility tracker and FSM stabilizer as practical components for reliable adapter selection under few-shot noise; (\textit{iii}) We provide both theoretical and empirical evidence on TotalSegmentator and FLARE datasets to demonstrate the effectiveness of SEA-PEFT.

\vspace{-10pt}
\section{Related Works}\label{sec:related}
\noindent\textbf{PEFT for medical imaging.} Adapter-based methods \cite{adapter1,adaptformer}, LoRA and variant \cite{lora,vera,fourierft}, BitFit \cite{bias1}, and prompt tuning \cite{li2021prefix,liu2022p,vpt} all adapt pretrained models by updating a small parameter subset while keeping the backbone frozen. For 3D medical FMs, FSEFT \cite{silva2023towards,baklouti2025regularized}, and FreqFit \cite{ly2025frequency} apply fixed single-adapter designs to few-shot organ segmentation. In all of these works, adapter type, insertion site, and rank are chosen before training, either by hand or via grid search on a held-out set. SEA-PEFT eliminates this step by selecting and adjusting the configuration during training.

\noindent\textbf{Automated PEFT configuration.} NOAH \cite{zhang2024neural} unifies adapters, prompts, and LoRA in a supernet and runs a one-shot NAS to find a good prompt configuration. AutoPEFT \cite{zhou2024autopeft} and Fairtune \cite{dutt2023fairtune} use multi-objective Bayesian optimization to search over adapter type, placement, and rank combinations. Both approaches require instantiating and fully fine-tuning each candidate configuration, which is not feasible for high-resolution 3D segmentation at 1--10 shots. 
AdaLoRA \cite{zhang2023adalora} reallocates LoRA rank during training based on singular value magnitudes, avoiding full re-training, but it optimizes an internal weight-space criterion rather than directly measuring Dice utility per adapter. SEA-PEFT instead evaluates each adapter's contribution to segmentation accuracy through on/off perturbation, which is both model-agnostic and directly aligned with the evaluation metric.
\vspace{-10pt}
\section{Self-Aufit PEFT}\label{sec:method}
\subsection{Problem Setup and Audit Space}
Let $f_\theta$ denote a frozen FM backbone. The \emph{audit space} $\mathcal{A}=\{a_i\}_{i=1}^{N}$ is a library of lightweight adapter candidates that can be activated, deactivated, or resized during training. Each unit $a_i$ has trainable weights $w_i$, binary gate $\alpha_i\in\{0,1\}$, and parameter cost $c_i$.

\begin{figure}[t]
\vspace{-3mm}
\noindent
\begin{minipage}[t]{0.48\textwidth}
  \vspace{0pt} 
  \centering
  \includegraphics[width=\linewidth]{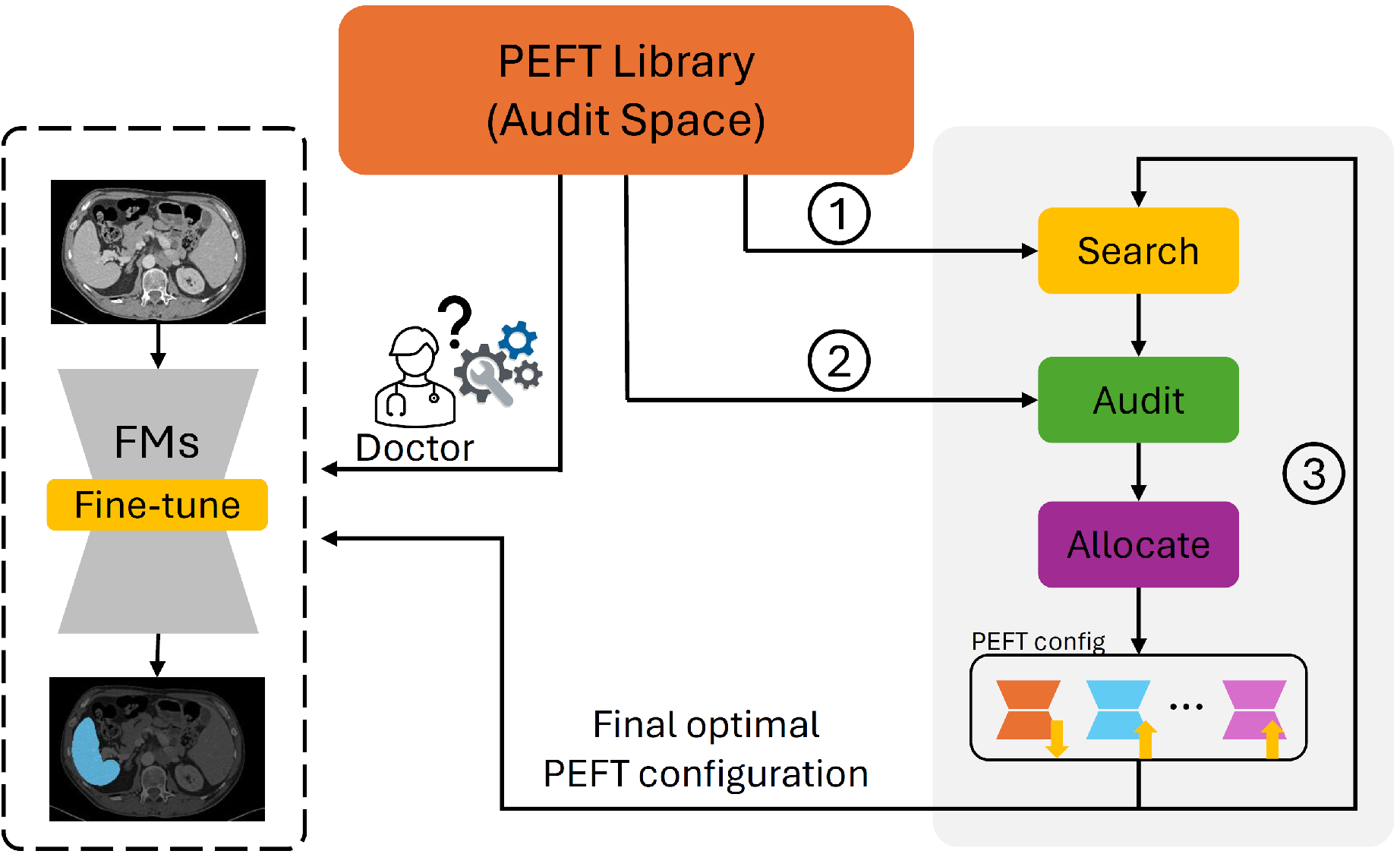} 
\end{minipage}\hfill
\begin{minipage}[t]{0.48\textwidth}
  \vspace{0pt} 
\captionof{figure}{SEA-PEFT iterates over a PEFT library with three steps: (1) Search selects a small adapter set under a parameter budget; (2) Audit runs lightweight on/off tests to estimate Dice-per-cost utility; (3) Allocate updates the active configuration via budget-aware rank adjustments. This enables rapid adaptation by clinical users without AI engineers.}  \label{fig:overview}
\end{minipage}
\vspace{-5mm}
\end{figure}

\noindent\textit{Audit space.} Following standard practice for neural architecture search (NAS) studies \cite{ru2020neural,xie2019exploring,yang2019evaluation}, we adopt three adapter topologies: Serial (\textbf{SA}), Parallel (\textbf{PA}), and their composite \textbf{SAPA}. For layer output $F(x)$ with projections $W_{down}\in\mathbb{R}^{D\times d}$ and an up-projection $W_{up}\in\mathbb{R}^{d\times D}$ ($d\!\ll\!D$): \vspace{-2pt} \begin{align}
    \textit{SA:}\quad y = \:\:&\text{ReLU}(F(x)\cdot W_{down})\cdot W_{up}\\
    \textit{PA:}\quad y = \:\:&\text{ReLU}(x\cdot W_{down})\cdot W_{up}\\
    \textit{SAPA:}\quad y = \:\:&\text{ReLU}(F(x)\cdot W_{down})\cdot W_{up} + \text{ReLU}(x\cdot W_{down})\cdot W_{up}
\end{align}

\noindent\textbf{Objective.} Given sets $\mathcal{D}_{train}$, $\mathcal{D}_{val}$, and budget $P_{\max}$: \vspace{-2pt} \vspace{-2pt}
\begin{equation}
\max_{\alpha,w}~\mathbb{E}_{(x,y)\sim\mathcal{D}_{val}}
[\text{Dice}(f_{\theta,\alpha,w}(x),y)]
\quad\text{s.t.}\sum_i\alpha_i c_i \leq P_{\max}
\label{eq:obj}
\vspace{-2mm}
\end{equation}

This mirrors a knapsack problem where each adapter $a_i$ has cost $c_i$ and value equal to its Dice improvement. The  utilities are unknown and must be estimated from validation feedback, SEA-PEFT solves an online variant of Eq. \eqref{eq:obj}.

\subsection{Search--Audit--Allocate Cycle}

\noindent\textbf{Search.} SEA-PEFT trains currently active adapters ($\alpha_i\!=\!1$) on $\mathcal{D}_{train}$ while keeping the backbone frozen. After $K$ steps, the framework transits to audit.

\noindent\textbf{Audit.} A mini-batch $\mathcal{M}^{(t)}\subset\mathcal{A}$ of $M\!\ll\!N$ adapters is sampled using a \textit{active/inactive}: empirically 30\% from currently active adapters (exploitation) and 70\% from inactive ones (exploration), plus an $\varepsilon$-exploration term up-weighting rarely probed adapters. For each $i\in\mathcal{M}^{(t)}$:
\vspace{-2pt}
\begin{align}
u_i &= \left[\text{Dice}(f_{\theta,\alpha},\mathcal{D}_{val}) -
            \text{Dice}(f_{\theta,\alpha\setminus i},\mathcal{D}_{val})\right]/c_i\label{eq:util}\\        
r^{(t)}_i &= \text{median}_w(\tilde{u}^t_i)-\lambda_S\cdot\text{IQR}_w(\tilde{u}^t_i) \quad\text{with} \quad\tilde{u}^t_i &= (1-\beta)u^t_i + \beta\tilde{u}^{(t-1)}_i\label{eq:iqr} 
\vspace{-2mm}
\end{align}
\noindent\textit{Utility smoothing.} Raw utilities are noisy because only $M$ adapters are probed per cycle. SEA-PEFT applies a two-stage filter as in Eq. \eqref{eq:iqr}.

Each adapter maintains a history deque of 3--5 audits. High $r^{(t)}_i$ signals stable Dice improvement; large IQR suppresses unstable adapters.

\begin{proposition}[Convergence of the EMA+IQR utility estimator]\label{prop1}
Let $u_i^{(t)} = \mu_i^{(t)} + \xi_i^{(t)}$ where $\mathbb{E}[\xi_i^{(t)}\!\mid\!\mathcal{F}_{t-1}]=0$, $\mathrm{Var}(\xi_i^{(t)})\leq\sigma^2$, and $\mu_i^{(t)}$ drifts by at most $\delta$ between consecutive audits. After $t$ audits with Exponential Moving Average (EMA) decay $\beta\in(0,1)$:
\begin{equation}
\Bigl|\mathbb{E}[\tilde{u}_i^{(t)}] - \mu_i^{(t)}\Bigr| \;\leq\; \frac{\delta\beta}{1-\beta}
+ \beta^t\Bigl|\mathbb{E}[\tilde{u}_i^{(0)}] - \mu_i^{(t)}\Bigr|,
\qquad
\mathrm{Var}(\tilde{u}_i^{(t)})
\;\leq\; \frac{(1-\beta)\sigma^2}{1+\beta}
\vspace{-2mm}
\end{equation}
Furthermore, over a sliding history window of audits, the sample Interquartile Range (IQR) $\mathrm{IQR}_H(\tilde{u}_i^{(t)})$ concentrates around the population IQR of the noise distribution with fluctuations of order $O(\sigma_{\mathrm{eff}}/\sqrt{H})$, where $\sigma_{\mathrm{eff}}=\sigma\sqrt{(1-\beta)/(1+\beta)}$ is the effective noise scale of the smoothed estimates. The robust score $r_i^{(t)} = \mathrm{median}_H(\tilde{u}_i^{(t)}) - \lambda_S\cdot\mathrm{IQR}_H(\tilde{u}_i^{(t)})$, with IQR shrinkage weight $\lambda_S>0$, suppresses adapters whose utility is highly variable across audits.
\end{proposition}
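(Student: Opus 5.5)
The plan is to unroll the EMA recursion $\tilde u_i^{(t)}=(1-\beta)u_i^{(t)}+\beta\tilde u_i^{(t-1)}$ into
\[
\tilde u_i^{(t)}=\beta^t\tilde u_i^{(0)}+(1-\beta)\sum_{k=0}^{t-1}\beta^k u_i^{(t-k)},
\]
and to treat the bias and the variance separately.

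\textbf{Bias.} Taking expectations and using the tower property with $\mathbb{E}[\xi_i^{(s)}\mid\mathcal{F}_{s-1}]=0$ gives $\mathbb{E}[u_i^{(s)}]=\mathbb{E}[\mu_i^{(s)}]$. I will treat $\mu$ as deterministic, or else apply the drift bound pathwise. Writing $\mu_i^{(t)}=\beta^t\mu_i^{(t)}+(1-\beta)\sum_{k<t}\beta^k\mu_i^{(t)}$, the error splits as
\[
\mathbb{E}[\tilde u_i^{(t)}]-\mu_i^{(t)}=\beta^t\bigl(\mathbb{E}[\tilde u_i^{(0)}]-\mu_i^{(t)}\bigr)+(1-\beta)\sum_{k=0}^{t-1}\beta^k\bigl(\mu_i^{(t-k)}-\mu_i^{(t)}\bigr).
\]
The drift assumption gives $|\mu_i^{(t-k)}-\mu_i^{(t)}|\le k\delta$. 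With $\sum_k k\beta^k\le \beta/(1-\beta)^2$, the second term is at most $\delta\beta/(1-\beta)$. This yields the stated bias bound exactly.

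\textbf{Variance.} The initialization $\tilde u_i^{(0)}$ is deterministic, or at least independent of the noise. The $\mu$ terms are treated as deterministic and contribute nothing. So the fluctuation of $\tilde u_i^{(t)}$ is $(1-\beta)\sum_k\beta^k\xi_i^{(t-k)}$. Because the noise increments form a martingale difference sequence, the cross terms vanish. Hence
\[
\mathrm{Var}(\tilde u_i^{(t)})\le(1-\beta)^2\sigma^2\sum_{k\ge0}\beta^{2k}=\frac{(1-\beta)\sigma^2}{1+\beta}.
\]
This step is where I must be explicit about the modelling assumption. If $\mu$ is random, for instance because it depends on training, I will state the bound conditionally on the $\mu$-path, or add a variance term for $\mu$ that vanishes under deterministic drift. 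Defining $\sigma_{\mathrm{eff}}$ as the square root of this bound then matches the statement.

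\textbf{IQR concentration.} This part is the main obstacle, because the smoothed values in the window are dependent: consecutive EMA outputs share noise with correlation roughly $\beta^{|s-s'|}$, and they are not identically distributed because of the drift. I would first subtract the drifting mean. Over a window of length $H$ the mean moves by at most $H\delta$, so the order statistics shift by at most that amount. I would then invoke a Bahadur-type representation of sample quantiles for stationary geometrically mixing sequences; an AR(1)-type linear process with bounded-variance innovations is geometrically mixing under mild density assumptions on the noise. That representation gives quantile fluctuations of order $O(\sigma_{\mathrm{eff}}/\sqrt{H})$, with a constant depending on $\beta$ through the mixing rate and on the density of the noise near its quartiles. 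The IQR, being the difference of two quantiles, inherits this rate.

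Given how informally the statement phrases this claim, I would present it as a heuristic consequence under these additional regularity assumptions rather than proving it from the stated moment conditions alone. In the same spirit, the final "suppression" claim follows directly from monotonicity: $r_i$ is decreasing in the IQR with slope $-\lambda_S<0$, so adapters with larger dispersion receive strictly lower scores at equal median.
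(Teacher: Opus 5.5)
\textbf{Bias and variance.} Your proposal is correct and matches the paper's proof sketch. You unroll the EMA, use $|\mu_i^{(t-k)}-\mu_i^{(t)}|\le k\delta$ with $(1-\beta)\sum_k k\beta^k\le\beta/(1-\beta)$, and evaluate $(1-\beta)^2\sum_{k\ge0}\beta^{2k}=(1-\beta)/(1+\beta)$. The one difference is how the cross terms vanish. The paper assumes the $\xi_i^{(t)}$ are independent. You use martingale-difference orthogonality, which is exactly what the stated hypothesis $\mathbb{E}[\xi_i^{(t)}\mid\mathcal{F}_{t-1}]=0$ gives, so your version fits the hypotheses better. You also make explicit that $\mu_i$ must be deterministic, which the paper leaves implicit. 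Otherwise $|\mathbb{E}[\tilde u_i^{(t)}]-\mu_i^{(t)}|$ is not a number, and the variance picks up a $\mu$-term.

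\textbf{IQR concentration.} The paper only cites ``standard order-statistics theory under bounded variance,'' and your caution is justified. Bounded variance alone cannot give $O(\sigma_{\mathrm{eff}}/\sqrt{H})$ fluctuations. Take Rademacher noise, constant $\mu_i$, and $\beta<1/2$. The smoothed values then split into four separated clusters, indexed by the last two noise signs, each of mass exactly $1/4$. The quartiles therefore sit in gaps of width $2\beta(1-2\beta)$, which is a fixed fraction of $\sigma_{\mathrm{eff}}$. The sample quartile jumps across such a gap with non-vanishing probability for every $H$. Two further issues apply. The window entries are AR(1)-correlated and drifting rather than i.i.d. The natural centering is the IQR of the smoothed-noise distribution, not of the raw noise as the statement says.

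\textbf{What each approach buys.} Your route is detrending with an $O(H\delta)$ shift, then a Bahadur representation for geometrically mixing sequences under a density condition at the quartiles. That is what it would take to make the claim rigorous. The cost is hypotheses the proposition does not state and an extra drift term it does not display. Presenting the IQR claim as heuristic is the honest reading, and it is effectively what the paper does, given that $H$ is only 3--5 in practice. Your monotonicity remark is all the ``suppression'' claim amounts to in either version.
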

\begin{proof}[Proof sketch]
The bias bound follows by expanding the EMA recursion, applying $|\mu_i^{(t-k)}-\mu_i^{(t)}|\leq k\delta$, and bounding $(1-\beta)\sum_{k=1}^{\infty}k\beta^k = \delta\beta/(1-\beta)$. The variance bound uses independence of $\{\xi_i^{(t)}\}$ and $\sum_{k\geq 0}\beta^{2k}=(1-\beta^2)^{-1}$. IQR concentration follows standard order-statistics theory under bounded variance.
\end{proof}

\begin{lemma}[Audit coverage]\label{lemma1}
Under any sampling strategy with exploration rate $\varepsilon>0$, audit set $\mathcal{M}^{(t)}$ (size $M$) at cycle $t$, and $|\mathcal{A}|=N$, every adapter $a_i\in\mathcal{A}$
\begin{equation}
\mathbb{E}\!\left[\,\sum_{t=1}^{T} \mathbf{1}[a_i\in\mathcal{M}^{(t)}]\right] \;\geq\; \rho_c\, T,
\qquad \rho_c = \frac{\varepsilon M}{N} > 0
\vspace{-2mm}
\end{equation}
satisfying the positive coverage condition required for sublinear regret in \cite{streeter2008online}. That is, every adapter is audited $\Omega(T)$ times regardless of its initial activity state.
\end{lemma}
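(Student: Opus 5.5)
The plan is to make precise what ``exploration rate $\varepsilon$'' means, and then combine a per-cycle inclusion bound with linearity of expectation. I will formalize the sampler as follows. At each cycle $t$, conditionally on the history $\mathcal{F}_{t-1}$, each of the $M$ audit slots is filled, with probability at least $\varepsilon$, by a draw from the uniform distribution on $\mathcal{A}$. The remaining mass may follow any (possibly adversarial, history-dependent) rule, such as the 30/70 active/inactive split. This is the standard $\varepsilon$-mixing model from \cite{streeter2008online}, and I take it as the definition of the ``$\varepsilon$-exploration term''. I will note explicitly that the lemma cannot hold without some assumption of this kind, since otherwise an adversarial sampler could simply never choose $a_i$.

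\textbf{Step 1 (per-cycle inclusion).} First I will show that $\Pr[a_i\in\mathcal{M}^{(t)}\mid\mathcal{F}_{t-1}]\ge \varepsilon M/N$.

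If slots are drawn with replacement, a single slot is uniform with probability at least $\varepsilon$ and then hits $a_i$ with probability $1/N$. Hence it hits $a_i$ with probability at least $\varepsilon/N$. A union-type lower bound is needed here, so I will instead use $\Pr[a_i\in\mathcal{M}^{(t)}]\ge 1-(1-\varepsilon/N)^M$. Since $\varepsilon M/N\le 1$, this is at least $\varepsilon M/N\cdot(1-\varepsilon M/(2N))$.

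That last bound is slightly weaker than the stated $\rho_c$, which creates a tension. The cleaner route is to assume the exploration component draws $\lceil \varepsilon M\rceil$ slots \emph{without replacement} uniformly from $\mathcal{A}$. The set of those slots then contains $a_i$ with probability exactly $\lceil\varepsilon M\rceil/N\ge \varepsilon M/N$. Alternatively, one can interpret the rule as ``with probability $\varepsilon$ the whole mini-batch is a uniform $M$-subset'', which gives exactly $\varepsilon\cdot M/N$. I will adopt one of these two formalizations so that the constant matches the statement, and remark that the with-replacement version gives the same conclusion up to a factor of $1-\varepsilon M/(2N)\ge 1/2$.

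\textbf{Step 2 (summing).} By the tower property, $\mathbb{E}[\mathbf{1}[a_i\in\mathcal{M}^{(t)}]]=\mathbb{E}[\Pr[a_i\in\mathcal{M}^{(t)}\mid\mathcal{F}_{t-1}]]\ge\rho_c$. Linearity of expectation then gives $\mathbb{E}[\sum_{t=1}^T\mathbf{1}[a_i\in\mathcal{M}^{(t)}]]\ge \rho_c T$. The bound does not depend on $\alpha^{(t)}$ or the past audits, so it holds regardless of whether $a_i$ is initially active, and $\rho_c>0$ because $\varepsilon,M>0$. This gives the $\Omega(T)$ coverage.

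\textbf{Step 3 (optional high-probability version).} The $\Omega(T)$ phrasing suggests a high-probability statement may also be wanted. I will define the martingale differences $X_t=\mathbf{1}[a_i\in\mathcal{M}^{(t)}]-\Pr[\cdot\mid\mathcal{F}_{t-1}]$, which are bounded in $[-1,1]$. Azuma--Hoeffding then shows that the count is at least $\rho_cT-\sqrt{2T\log(1/\eta)}$ with probability $1-\eta$, which is still linear in $T$.

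The main obstacle is Step 1: pinning down a formal sampler under which the exact constant $\varepsilon M/N$ is valid. The rest is linearity of expectation.
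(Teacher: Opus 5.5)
Your proposal is correct and follows the paper's argument. The paper's sketch takes the $\varepsilon$-exploration term to guarantee every adapter a per-cycle inclusion probability of at least $\varepsilon M/N$, independent of $\alpha_i$, and then sums over the $T$ cycles, which is your Steps 1--2; your explicit sampler model, the with-replacement caveat, and the Azuma--Hoeffding extension are refinements the paper does not include.
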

\begin{proof}[Proof sketch]
At each audit cycle the $\varepsilon$-exploration term assigns a minimum selection probability of $\varepsilon M/N$ to every adapter independently of its activity gate $\alpha_i$. Summing over $T$ cycles and taking expectations gives the bound directly. Without $\varepsilon$-exploration, $\rho_c=0$ and inactive adapters at initialization are never audited, making the final knapsack re-solve arbitrarily suboptimal.
\end{proof}
\noindent\textbf{Allocate.} Adapters are greedily activated in descending density order $d_i=r_i/c_i$ until $\sum_i\alpha_i c_i\leq P_{\max}$. Replacements are accepted iff improvement exceeds $\mu_\text{eff}$.

\noindent\textbf{Finite-State Machine Stabilizer.}
Na\"ive greedy allocation allows adapters to flip on/off at nearly every cycle, causing \emph{configuration chatter} that slows convergence and destabilizes EMA utility estimates. The FSM requires $\tau_\text{act}$ consecutive consistent ``votes'' before committing an activity change (and $\tau_\text{rank}$ for rank):
\begin{equation}
\alpha^{(t+1)}_i \leftarrow \hat{\alpha}^{(t+1)}_i~\text{ only if }~c_i^\text{act} \geq \tau_\text{act}
\label{eq:fsm}
\end{equation}
with $\tau_\text{act},\tau_\text{rank}\approx 2$--$3$. FSM reduces reconfigurations to $T_c\approx 20$--$30$ per run, suppressing noise-driven instability while preserving online adaptability

\begin{proposition}[FSM reduces structural error]\label{prop2}
Let $T_c$ denote the number of audit cycles in which the active set changes, and let $\hat{\alpha}_i^{(t+1)}\in\{0,1\}$ be the activity proposed by the greedy allocator at cycle $t$. The FSM maintains a per-adapter vote counter $c_i^{\mathrm{act}}$, incremented when $\hat{\alpha}_i^{(t+1)}\neq\alpha_i^{(t)}$ and reset otherwise, and commits a change only when $c_i^{\mathrm{act}}\geq\tau_{\mathrm{act}}$ consecutive consistent proposals are observed. This gives:
\begin{equation}
T_c \;\leq\; \left\lfloor \frac{T}{\tau_{\mathrm{act}}} \right\rfloor
\end{equation}
In the regret decomposition adapting \cite{streeter2008online}, combined with the structural error term $O(\varepsilon P_{\max} T_c)$ in the regret decomposition, the FSM reduces structural error by a factor of $\tau_{\mathrm{act}}$, tightening the knapsack approximation guarantee of \cite{sviridenko2004note}.
\end{proposition}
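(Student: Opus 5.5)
The argument splits into a combinatorial counting step and a substitution into the regret decomposition. The counting step is the core, and I would prove it by charging each committed change to a disjoint block of cycles. Fix an adapter $a_i$. By the FSM rule in Eq.~\eqref{eq:fsm}, a commit for $a_i$ at cycle $t$ requires $c_i^{\mathrm{act}}\geq\tau_{\mathrm{act}}$. This means the proposals at cycles $t-\tau_{\mathrm{act}}+1,\dots,t$ all disagreed with the current state $\alpha_i$. I would assume the counter is reset to zero immediately after a commit, as is implicit in the FSM. Then the next commit for $a_i$ needs $\tau_{\mathrm{act}}$ fresh consecutive votes. These are cycles strictly after $t$, since after the flip the counter only grows on disagreement with the new state. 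So the $\tau_{\mathrm{act}}$-cycle windows preceding successive commits of $a_i$ are pairwise disjoint subsets of $\{1,\dots,T\}$, and the number of commits of $a_i$ is at most $\lfloor T/\tau_{\mathrm{act}}\rfloor$.

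The main obstacle is passing from a per-adapter bound to a bound on $T_c$, the number of cycles in which \emph{any} adapter changes. Different adapters can commit at different, interleaved cycles, so in general the union could have up to $N\lfloor T/\tau_{\mathrm{act}}\rfloor$ change cycles. I would handle this in one of two ways.

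\textbf{First option: synchronized votes.} The votes are generated by the single greedy allocator solving one knapsack per cycle. Whenever the proposed active set $\hat\alpha^{(t+1)}$ differs from $\alpha^{(t)}$, I would treat it as one configuration proposal and count consecutive cycles in which the same proposed configuration persists. Applying the disjoint-window argument to this configuration-level counter gives $T_c\leq\lfloor T/\tau_{\mathrm{act}}\rfloor$ directly.

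\textbf{Fallback: per-adapter bound.} If the per-adapter reading of the counter must be kept, I would state the bound per adapter, $T_c^{(i)}\leq\lfloor T/\tau_{\mathrm{act}}\rfloor$. I would then note that the structural-error term is additive over adapters, so the same factor-$\tau_{\mathrm{act}}$ improvement holds for the summed structural error.

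For the regret claim, I would invoke the decomposition adapted from \cite{streeter2008online}. It uses Lemma~\ref{lemma1} to guarantee every adapter is audited $\Omega(T)$ times, which controls the estimation part via Proposition~\ref{prop1}. The remaining structural term is $O(\varepsilon P_{\max}T_c)$: each reconfiguration costs at most an $\varepsilon$-fraction of budget-weighted utility while the EMA re-adapts. Without the FSM, the trivial bound is $T_c\leq T$; with it, $T_c\leq T/\tau_{\mathrm{act}}$. Substituting gives structural error $O(\varepsilon P_{\max}T/\tau_{\mathrm{act}})$, a factor-$\tau_{\mathrm{act}}$ reduction. Finally, since the greedy density-ordered allocator with the final guard-free re-solve inherits the approximation ratio of \cite{sviridenko2004note} up to additive regret, the smaller additive term tightens the overall guarantee.
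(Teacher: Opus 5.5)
Your first option is the paper's argument. The sketch says each change needs $\tau_{\mathrm{act}}$ consecutive consistent proposals from the allocator, so at least $\tau_{\mathrm{act}}$ cycles separate any two changes, which gives $T_c\le\lfloor T/\tau_{\mathrm{act}}\rfloor$, and this is substituted into the $O(\varepsilon P_{\max}T_c)$ term just as you do.

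Both of your caveats are genuine, and the paper's sketch passes over them. With literally per-adapter counters, two adapters with $\tau_{\mathrm{act}}=2$ whose commits are offset by one cycle already give $T_c=T-1$. Without an explicit reset on commit, the stated ``reset otherwise'' rule lets an adapter flip straight back. So the bound holds only under the configuration-level counter with reset-on-commit. Your per-adapter fallback does not recover it, since that bounds flips rather than $T_c$.
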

\begin{proof}[Proof sketch]
A configuration change at cycle $t$ requires $\tau_{\mathrm{act}}$ consecutive cycles with a consistent new proposal from the allocator. Hence at least $\tau_{\mathrm{act}}$ cycles must elapse between any two changes, giving $T_c\leq\lfloor T/\tau_{\mathrm{act}}\rfloor$ by direct counting. Hence, reducing the structural error term $O(\varepsilon P_{\max}T_c)$ by a factor of $\tau_{\mathrm{act}}$ and tightening the final approximation guarantee.
\end{proof}

\textbf{Final Re-solve and Re-fine-tuning.} After $T$ steps, SEA-PEFT performs one \emph{guard-free} knapsack re-solve on stable scores $\{r^{(T)}_i\}$ using density-greedy plus a single-swap local improvement pass, yielding $\mathcal{S}^\star_\text{final}$. Adapter weights and optimizer states are then re-initialized, and a standard fine-tuning run with only the selected adapters active is performed. This two-phase design (online search $+$ focused re-finetune) ensures the final model is not burdened by exploratory weights from the audit phase.

\vspace{-10pt}
\section{Experiments} \label{sec:experiments}
\textbf{Audit space}. 
Following the empirical experience in prior studies \cite{lora,adaptformer,basu2024strong,ly2025frequency,silva2023towards,baklouti2025regularized}, we fine-tune the following: For self-attention layer: we attached LoRA \cite{lora} with rank $\in \{2, 4, 8, 16\}$ with SA, PA, and SAPA. Feed-forward layer: we attached LoRA \cite{lora} with rank $\in \{2, 4, 8, 16\}$ and AdaptFormer \cite{adaptformer} with bottleneck $\in \{4, 8, 16, 32\}$, both with SA, PA, and SAPA. Normalization layer: we attached Affine-LN \cite{basu2024strong}.

\textbf{Datasets and backbones}. We evaluate on nine abdominal organs (spleen, left kidney, gallbladder, esophagus, liver, pancreas, stomach, duodenum, aorta) under 1-, 5-, and 10-shot regimes. Binary segmentation is performed on TotalSegmentator \cite{wasserthal2023totalsegmentator} and multi-class on FLARE \cite{ma2024unleashing}, using pre-trained FSEFT \cite{silva2023towards} and SuPreM \cite{li2024well} on frozen Swin-UNETR \cite{hatamizadeh2021swin} backbones.

Note that, NOAH \cite{zhang2024neural} and AutoPEFT \cite{zhou2024autopeft} require full tuning per candidate configuration. Over the audit space at 1--10 shots, this would require hundreds of 3D training runs, SEA-PEFT estimates adapter utility via on/off perturbation within a single run, making direct comparison infeasible and arguably unfair.

\textbf{Implementation}. After the search--audit--allocate loop converges, following \cite{silva2023towards,baklouti2025regularized,ly2025frequency}, the selected configuration is fine-tuned image size of $96 \times 96 \times 96$ patches with batch size 1, AdamW, lr = $1\times e^{-3}$, 200 epochs with early stopping. Augmentation includes intensity shifts and 90 degree rotations. All experiments run on a single V100 32\,GB GPU. Results are averaged across 3 random seeds on hold-out test set. The metric presented is Dice.

\vspace{-10pt}
\subsection{Results and Discussion}
\textbf{Main results.} Tables \ref{tab:totalseg} reports few-shot adaptation results on TotalSegmentator \cite{wasserthal2023totalsegmentator} and FLARE’22 \cite{ma2024unleashing} using the final configurations selected by our \textit{Search--Audit--Allocate} loop. Across all shot regimes and both datasets, SEA-PEFT consistently achieves the highest mean Dice, outperforming fixed-topology PEFT baselines without any manual tuning of adapter type, placement, or rank. On TotalSegmentator, it attains 78.33/79.90/80.29 Dice in the 1-, 5-, and 10-shot settings, a clear improvement over the strongest baseline. On FLARE’22, SEA-PEFT achieves 75.77 (1-shot) and 76.33 (5-shot), continuing the same upward trend as supervision increases. These gains demonstrate that online self-auditing reliably generalizes across distinct clinical datasets.

\begin{table*}[!t]
\centering
\setlength{\extrarowheight}{0pt}
\addtolength{\extrarowheight}{\aboverulesep}
\addtolength{\extrarowheight}{\belowrulesep}
\setlength{\aboverulesep}{0pt}
\setlength{\belowrulesep}{0pt}
\caption{Few-shot efficient adaptation on TotalSegmentator \cite{wasserthal2023totalsegmentator} and FLARE \cite{ma2024unleashing} with pre-trained FSEFT \cite{silva2023towards}. \textbf{Bold} and \uline{underscore} indicates the best and second best result.}\label{tab:totalseg}
\resizebox{\textwidth}{!}{
\small
\renewcommand{\arraystretch}{0.6}
\begin{tabular}{cclcccccccccc} 
\toprule
\multicolumn{2}{c}{Setting} & \multicolumn{1}{c}{Method} & Spl & lKid & Gall & Eso & Liv & Pan & Sto & Duo & Aor & \textbf{Avg.} \\ 
\hline
\multirow{18}{*}{\rotatebox[origin=c]{90}{TotalSeg (binary)}} & \multirow{6}{*}{1-shot} & Full Fine-tuning & 91.31 & 88.00 & 76.58 & \uline{49.11} & \textbf{98.80} & 77.31 & 68.54 & \textbf{58.24} & \textbf{81.17} & \uline{75.90} \\ 
\cline{3-13}
 &  & BitFit & 88.42 & 87.89 & 75.49 & 48.23 & 92.00 & 74.92 & 70.27 & 49.95 & 72.03 & 73.24 \\
 &  & LoRA & 90.38 & 88.68 & 77.63 & 44.54 & 92.48 & \uline{78.13} & \uline{72.19} & 54.75 & 70.28 & 74.34 \\
 &  & AdaptFormer & 89.85 & 88.75 & \uline{81.22} & 47.64 & 89.95 & 68.97 & 70.63 & \uline{57.45} & 76.47 & 74.55 \\
 &  & Affine-LN & \uline{91.62} & \uline{88.87} & 79.34 & 45.57 & 90.17 & 75.05 & 67.43 & 49.76 & 71.78 & 73.29 \\
 &  & {\cellcolor[rgb]{0.875,0.875,0.875}}SEA-PEFT & \multicolumn{1}{l}{{\cellcolor[rgb]{0.875,0.875,0.875}}\textbf{93.32}} & \multicolumn{1}{l}{{\cellcolor[rgb]{0.875,0.875,0.875}}\textbf{89.94}} & \multicolumn{1}{l}{{\cellcolor[rgb]{0.875,0.875,0.875}}\textbf{85.04}} & \multicolumn{1}{l}{{\cellcolor[rgb]{0.875,0.875,0.875}}\textbf{50.13}} & \multicolumn{1}{l}{{\cellcolor[rgb]{0.875,0.875,0.875}}\uline{93.42}} & \multicolumn{1}{l}{{\cellcolor[rgb]{0.875,0.875,0.875}}\textbf{79.68}} & \multicolumn{1}{l}{{\cellcolor[rgb]{0.875,0.875,0.875}}\textbf{75.75}} & \multicolumn{1}{l}{{\cellcolor[rgb]{0.875,0.875,0.875}}56.95} & \multicolumn{1}{l}{{\cellcolor[rgb]{0.875,0.875,0.875}}\uline{80.71}} & \multicolumn{1}{l}{{\cellcolor[rgb]{0.875,0.875,0.875}}\textbf{78.33}} \\ 
\hhline{~============}
 & \multirow{6}{*}{5-shot} & Full Fine-tuning & 90.47 & 89.02 & 71.88 & \textbf{54.33} & \textbf{93.75} & 78.20 & 59.26 & \uline{66.27} & \textbf{91.20} & \uline{77.15} \\ 
\cline{3-13}
 &  & BitFit & 91.46 & 88.26 & 71.37 & 50.50 & 92.96 & 57.94 & 71.22 & 56.91 & 73.65 & 72.70 \\
 &  & LoRA & \uline{92.18} & 89.08 & 73.69 & 48.98 & 93.18 & \uline{78.27} & 72.73 & 60.18 & 78.43 & 76.30 \\
 &  & AdaptFormer & 90.32 & 89.49 & 74.93 & 45.54 & 93.16 & 71.36 & 74.01 & 64.62 & 83.22 & 76.29 \\
 &  & Affine-LN & 91.75 & \uline{90.19} & \uline{76.86} & 47.51 & 93.30 & 77.25 & \uline{74.15} & 63.56 & 74.66 & 76.58 \\
 &  & {\cellcolor[rgb]{0.875,0.875,0.875}}SEA-PEFT & \multicolumn{1}{l}{{\cellcolor[rgb]{0.875,0.875,0.875}}\textbf{93.72}} & \multicolumn{1}{l}{{\cellcolor[rgb]{0.875,0.875,0.875}}\textbf{90.68}} & \multicolumn{1}{l}{{\cellcolor[rgb]{0.875,0.875,0.875}}\textbf{81.21}} & \multicolumn{1}{l}{{\cellcolor[rgb]{0.875,0.875,0.875}}\uline{52.25}} & \multicolumn{1}{l}{{\cellcolor[rgb]{0.875,0.875,0.875}}\uline{93.42}} & \multicolumn{1}{l}{{\cellcolor[rgb]{0.875,0.875,0.875}}\textbf{80.23}} & \multicolumn{1}{l}{{\cellcolor[rgb]{0.875,0.875,0.875}}\textbf{76.56}} & \multicolumn{1}{l}{{\cellcolor[rgb]{0.875,0.875,0.875}}\textbf{66.30}} & \multicolumn{1}{l}{{\cellcolor[rgb]{0.875,0.875,0.875}}\uline{84.76}} & \multicolumn{1}{l}{{\cellcolor[rgb]{0.875,0.875,0.875}}\textbf{79.90}} \\ 
\hhline{~============}
 & \multirow{6}{*}{10-shot} & Full Fine-tuning & \uline{92.97} & 80.85 & 72.96 & \textbf{56.56} & 92.36 & 78.67 & 61.27 & 66.56 & \textbf{90.15} & 76.93 \\ 
\cline{3-13}
 &  & BitFit & 92.51 & 89.41 & 76.48 & 50.57 & 92.54 & 79.40 & 67.09 & 65.32 & 76.89 & 76.69 \\
 &  & LoRA & 91.91 & \uline{90.10} & \uline{82.66} & 49.36 & \uline{93.45} & \uline{80.86} & 67.97 & 61.36 & 82.47 & \uline{77.79} \\
 &  & AdaptFormer & 90.36 & 89.91 & 73.67 & 49.19 & 89.50 & 71.13 & 66.85 & 64.95 & 80.48 & 75.12 \\
 &  & Affine-LN & 92.20 & 86.02 & 79.58 & 50.27 & 89.98 & 77.64 & \uline{69.15} & \uline{67.64} & 83.53 & 77.33 \\
 &  & {\cellcolor[rgb]{0.875,0.875,0.875}}SEA-PEFT & \multicolumn{1}{l}{{\cellcolor[rgb]{0.875,0.875,0.875}}\textbf{93.35}} & \multicolumn{1}{l}{{\cellcolor[rgb]{0.875,0.875,0.875}}\textbf{90.57}} & \multicolumn{1}{l}{{\cellcolor[rgb]{0.875,0.875,0.875}}\textbf{83.96}} & \multicolumn{1}{l}{{\cellcolor[rgb]{0.875,0.875,0.875}}\uline{51.70}} & \multicolumn{1}{l}{{\cellcolor[rgb]{0.875,0.875,0.875}}\textbf{94.23}} & \multicolumn{1}{l}{{\cellcolor[rgb]{0.875,0.875,0.875}}\textbf{81.74}} & \multicolumn{1}{l}{{\cellcolor[rgb]{0.875,0.875,0.875}}\textbf{72.55}} & \multicolumn{1}{l}{{\cellcolor[rgb]{0.875,0.875,0.875}}\textbf{67.83}} & \multicolumn{1}{l}{{\cellcolor[rgb]{0.875,0.875,0.875}}\uline{86.72}} & \multicolumn{1}{l}{{\cellcolor[rgb]{0.875,0.875,0.875}}\textbf{80.29}} \\ 
\hline
\multirow{12}{*}{\rotatebox[origin=c]{90}{FLARE (multi-organ)}} & \multirow{4}{*}{1-shot} & LoRA & \uline{87.81} & 76.22 & 55.09 & \textbf{74.33} & \textbf{95.05} & \uline{82.60} & \uline{74.72} & 36.03 & 90.31 & 74.68 \\
 &  & Adaptformer & 83.69 & \textbf{76.77} & 54.49 & 72.13 & 94.24 & 80.01 & 73.85 & \uline{43.86} & \uline{90.79} & 74.43 \\
 &  & Affine-LN & 85.46 & \uline{76.55} & \textbf{55.91} & 72.23 & \uline{94.85} & 82.20 & 73.14 & \textbf{44.66} & \textbf{91.57} & \uline{75.17} \\
 &  & {\cellcolor[rgb]{0.875,0.875,0.875}}SEA-PEFT & {\cellcolor[rgb]{0.875,0.875,0.875}}\textbf{88.33} & {\cellcolor[rgb]{0.875,0.875,0.875}}74.69 & {\cellcolor[rgb]{0.875,0.875,0.875}}\uline{55.57} & {\cellcolor[rgb]{0.875,0.875,0.875}}\uline{74.15} & {\cellcolor[rgb]{0.875,0.875,0.875}}94.61 & {\cellcolor[rgb]{0.875,0.875,0.875}}\textbf{83.34} & {\cellcolor[rgb]{0.875,0.875,0.875}}\textbf{79.87} & {\cellcolor[rgb]{0.875,0.875,0.875}}40.70 & {\cellcolor[rgb]{0.875,0.875,0.875}}90.70 & {\cellcolor[rgb]{0.875,0.875,0.875}}\textbf{75.77} \\ 
\hhline{~============}
 & \multirow{4}{*}{5-shot} & LoRA & \uline{86.56} & \uline{76.15} & 54.24 & 74.46 & \uline{94.93} & \uline{83.01} & 75.17 & 43.66 & \textbf{92.01} & 75.58 \\
 &  & Adaptformer & 81.50 & 75.36 & \textbf{55.86} & 74.53 & 94.66 & 82.87 & \textbf{76.15} & \uline{46.89} & 91.43 & 75.47 \\
 &  & Affine-LN & 85.73 & 76.00 & 53.71 & \uline{75.20} & 94.62 & 82.04 & 74.69 & \textbf{47.24} & 91.38 & \uline{75.62} \\
 &  & {\cellcolor[rgb]{0.875,0.875,0.875}}SEA-PEFT & {\cellcolor[rgb]{0.875,0.875,0.875}}\textbf{87.15} & {\cellcolor[rgb]{0.875,0.875,0.875}}\textbf{77.97} & {\cellcolor[rgb]{0.875,0.875,0.875}}\uline{54.96} & {\cellcolor[rgb]{0.875,0.875,0.875}}\textbf{76.02} & {\cellcolor[rgb]{0.875,0.875,0.875}}\textbf{95.05} & {\cellcolor[rgb]{0.875,0.875,0.875}}\textbf{83.04} & {\cellcolor[rgb]{0.875,0.875,0.875}}\uline{75.70} & {\cellcolor[rgb]{0.875,0.875,0.875}}45.20 & {\cellcolor[rgb]{0.875,0.875,0.875}}\uline{91.93} & {\cellcolor[rgb]{0.875,0.875,0.875}}\textbf{76.33} \\ 
\hhline{~============}
 & \multirow{4}{*}{10-shot} & LoRA & 84.00 & \textbf{77.05} & 54.72 & 74.02 & 94.64 & 82.66 & 76.31 & 46.11 & 91.59 & 75.68 \\
 &  & Adaptformer & 87.08 & 76.96 & 55.69 & 74.78 & 93.69 & 82.45 & 77.07 & 46.29 & 91.38 & 76.15 \\
 &  & Affine-LN & \textbf{87.68} & \uline{76.99} & \textbf{55.91} & \textbf{75.61} & \uline{95.04} & \textbf{83.57} & \textbf{79.27} & \textbf{51.88} & \uline{91.69} & \textbf{77.52} \\
 &  & {\cellcolor[rgb]{0.875,0.875,0.875}}SEA-PEFT & {\cellcolor[rgb]{0.875,0.875,0.875}}\uline{87.22} & {\cellcolor[rgb]{0.875,0.875,0.875}}76.68 & {\cellcolor[rgb]{0.875,0.875,0.875}}\uline{55.71} & {\cellcolor[rgb]{0.875,0.875,0.875}}\uline{75.55} & {\cellcolor[rgb]{0.875,0.875,0.875}}\textbf{95.17} & {\cellcolor[rgb]{0.875,0.875,0.875}}\uline{83.27} & {\cellcolor[rgb]{0.875,0.875,0.875}}\uline{79.23} & {\cellcolor[rgb]{0.875,0.875,0.875}}\uline{48.82} & {\cellcolor[rgb]{0.875,0.875,0.875}}\textbf{91.69} & {\cellcolor[rgb]{0.875,0.875,0.875}}\uline{77.03} \\
\bottomrule
\end{tabular}
}
\vspace{-20pt}
\end{table*}

\begin{table*}[t]
\begin{minipage}[t]{0.55\textwidth}
\centering
\setlength{\extrarowheight}{0pt}
\addtolength{\extrarowheight}{\aboverulesep}
\addtolength{\extrarowheight}{\belowrulesep}
\setlength{\aboverulesep}{0pt}
\setlength{\belowrulesep}{0pt}
\caption{Few-shot efficient adaptation on Totalsegmentator with pre-trained SuPreM \cite{li2024well}.\label{tab:totalseg2}}
\resizebox{1.0\textwidth}{!}{
\small
\renewcommand{\arraystretch}{0.7}
\begin{tabular}{cclcccccc} 
\toprule
\multicolumn{2}{c}{Setting} & \multicolumn{1}{c}{Method} & Gall & Eso & Pan & Duo & Aor & \textbf{Avg.} \\ 
\cline{3-9}
\multirow{10}{*}{\rotatebox[origin=c]{90}{TotalSeg (binary)}} & \multirow{5}{*}{5-shot} & LoRA \cite{lora}& 52.50 & 46.43 & 66.86 & 54.15 & 73.33 & 58.65 \\
 &  & Adaptformer \cite{adaptformer}& 60.17 & 51.79 & 76.73 & 74.49 & \textbf{93.12} & 71.26 \\
 &  & Affine-LN \cite{basu2024strong} & 76.10 & 50.04 & 75.46 & 71.91 & \uline{90.91} & 72.88 \\
 &  & Spatial Adapter \cite{silva2023towards} & \uline{85.08} & \uline{55.56} & \uline{78.84} & \textbf{78.17} & 87.40 & \uline{77.01} \\
 &  & {\cellcolor[rgb]{0.875,0.875,0.875}}SEA-PEFT & {\cellcolor[rgb]{0.875,0.875,0.875}}\textbf{86.67} & {\cellcolor[rgb]{0.875,0.875,0.875}}\textbf{57.32} & {\cellcolor[rgb]{0.875,0.875,0.875}}\textbf{78.87} & {\cellcolor[rgb]{0.875,0.875,0.875}}\uline{75.86} & {\cellcolor[rgb]{0.875,0.875,0.875}}90.74 & {\cellcolor[rgb]{0.875,0.875,0.875}}\textbf{77.89} \\ 
\hhline{~========}
 & \multirow{5}{*}{10-shot} & LoRA \cite{lora}& 59.25 & 55.33 & \uline{77.72} & 73.89 & 80.59 & 69.35 \\
 &  & Adaptformer \cite{adaptformer}& 77.78 & 54.10 & 76.05 & 77.58 & \uline{93.25} & 75.75 \\
 &  & Affine-LN \cite{basu2024strong}& 80.84 & 55.80 & 76.98 & 75.66 & 92.50 & \uline{76.35} \\
 &  & Spatial Adapter \cite{silva2023towards} & \uline{81.61} & \uline{56.24} & 77.69 & \textbf{79.54} & 84.66 & 75.95 \\
 &  & {\cellcolor[rgb]{0.875,0.875,0.875}}SEA-PEFT & {\cellcolor[rgb]{0.875,0.875,0.875}}\textbf{88.51} & {\cellcolor[rgb]{0.875,0.875,0.875}}\textbf{57.88} & {\cellcolor[rgb]{0.875,0.875,0.875}}\textbf{80.98} & {\cellcolor[rgb]{0.875,0.875,0.875}}\uline{78.95} & {\cellcolor[rgb]{0.875,0.875,0.875}}\textbf{93.35} & {\cellcolor[rgb]{0.875,0.875,0.875}}\textbf{79.93} \\
\bottomrule
\end{tabular}
}
\end{minipage}
\hfill
\begin{minipage}[t]{0.4\textwidth}
\centering
\caption{Average times of end-to-end seach-audit-allocate and re-fine-tune with optimal configurations for all experiments.\label{tab:time}}
\resizebox{\textwidth}{!}{
\small
\begin{tabular}{cccc} 
\toprule
Setting & Total steps & Avg. Time & Avg. \#Params (\%) \\ 
\hline
1-shot & 6000 & 2.5 hours &  0.2 \\
5-shot & 8000 & 4.5 hours &  0.2 \\
10-shot & 12000 & 6.5 hours &  0.2 \\
\bottomrule
\end{tabular}
}

\end{minipage}
\end{table*}
Results on pre-trained SuPreM \cite{li2024well} in Table \ref{tab:totalseg2} reinforce the same pattern observed on TotalSegmentator and FLARE'22. SEA-PEFT achieves the highest average Dice in both the 5-shot (77.89) and 10-shot (79.93) settings, with consistent gains on gallbladder, esophagus, and pancreas over Spatial Adapter \cite{silva2023towards}. Notably, the margin over the best fixed baseline widens from 0.88 points at 5-shot to 3.58 points at 10-shot, mirroring the scaling trend across all benchmarks: as more training examples are available, audit signals become less noisy, the allocator converges to better configurations. The advantage of dynamic over static capacity allocation grows.
\begin{figure*}[t]
\centering
  \begin{minipage}[b]{0.32\columnwidth}
  \centering
    \includegraphics[trim={00 1.8cm 0 0},clip,width=0.85\textwidth]{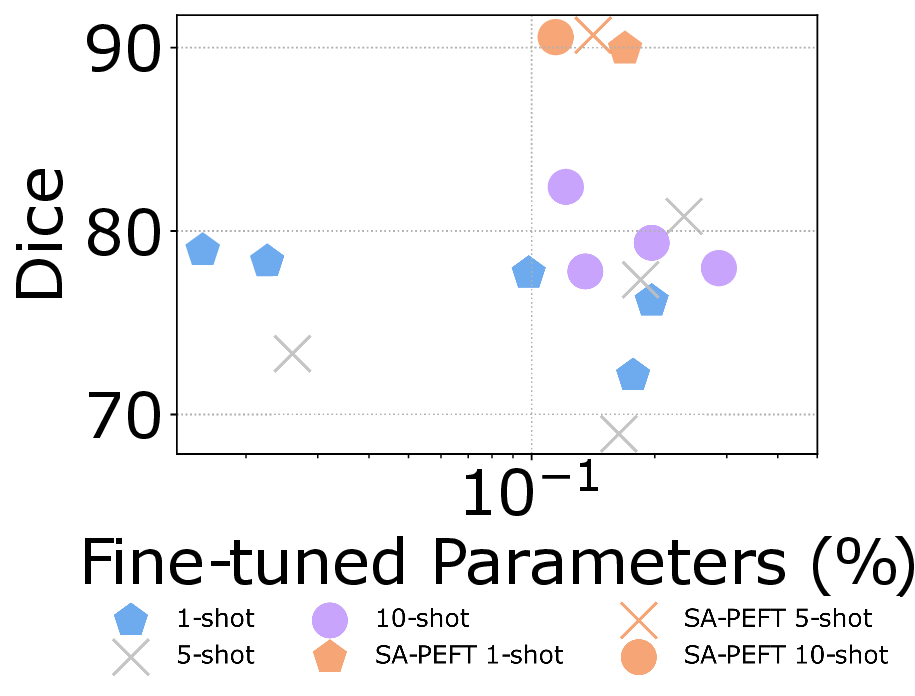}\\ 
    Gal
  \end{minipage}
  \hfill
  \begin{minipage}[b]{0.32\columnwidth}
  \centering
    \includegraphics[trim={0 1.8cm 0 0},clip,width=0.85\textwidth]{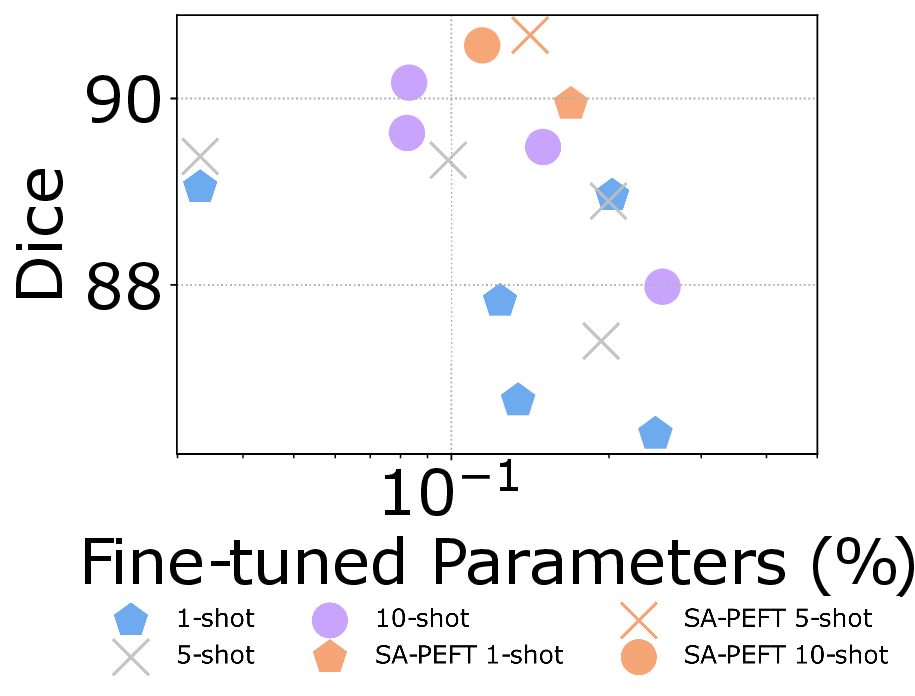}\\
    Kid
    \end{minipage}
  \hfill
  \begin{minipage}[b]{0.32\columnwidth}
  \centering
    \includegraphics[trim={00 1.8cm 0 0},clip,width=0.85\textwidth]{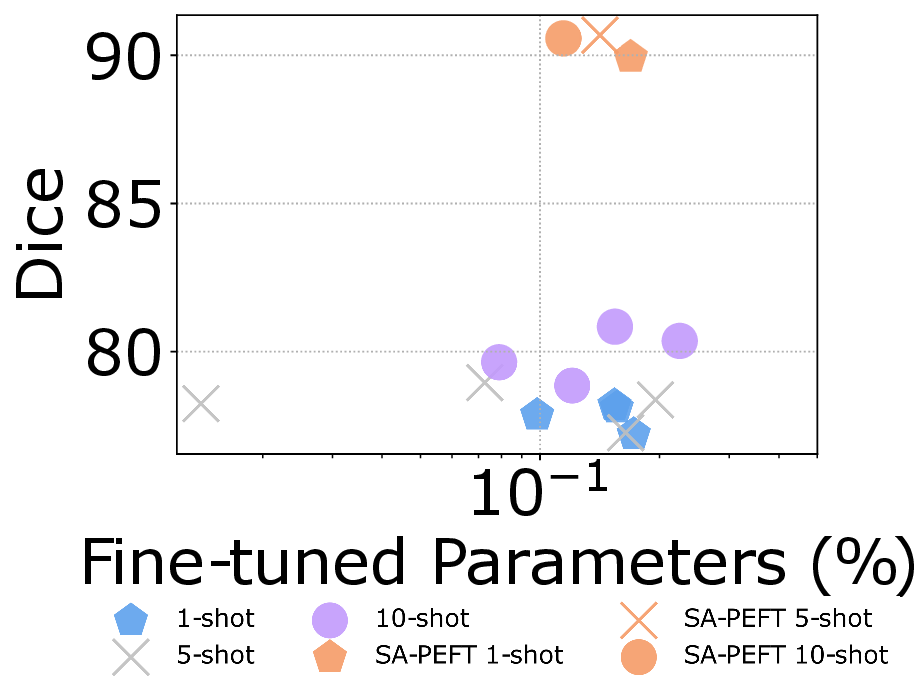}\\
    Pan
\end{minipage} \\
  \begin{minipage}[c]{\columnwidth}
  \centering
\includegraphics[trim={0 0.3cm 0 10.1cm},clip,width=0.6\textwidth]{imgs/gallbladder_rand_size300_2.eps} 
\end{minipage} \\
\caption{SA-PEFT vs. random PEFT configurations under fixed parameter budgets. Across 1/5/10-shot settings, SEA-PEFT’s gains (\textcolor{orange}{Orange}) come from principled selection rather than simply using more parameters.}\label{fig:rand}
\vspace{-5pt}
\end{figure*}
\begin{figure*}[!h]
\centering
\includegraphics[width=\textwidth]{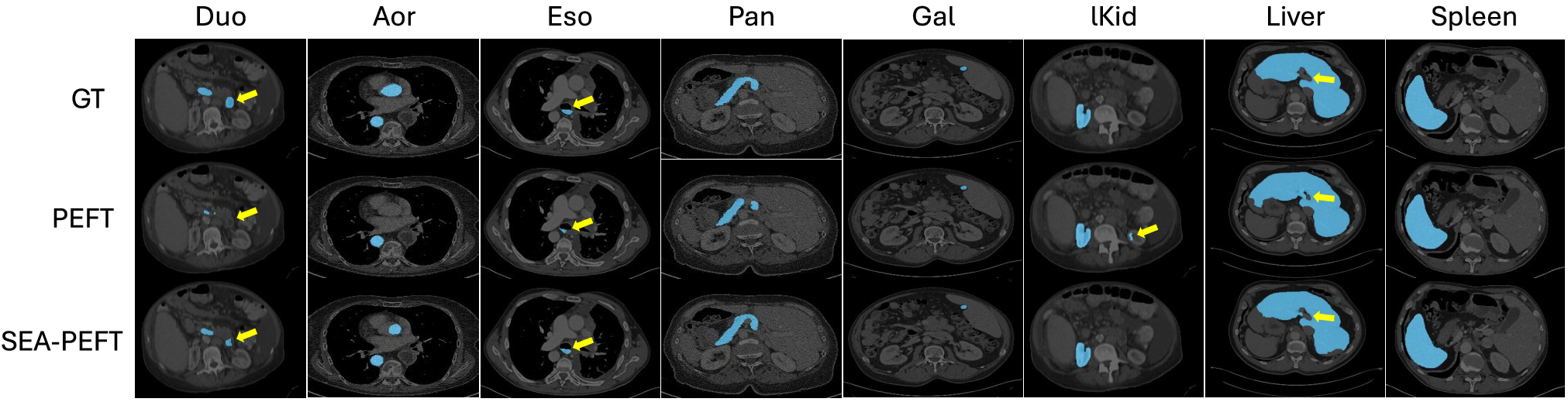} 
\caption{Qualitative segmentation comparison between fixed-topology PEFT and SEA-PEFT across eight organs.}\label{fig:viz}
\vspace{-10pt}
\end{figure*}

\textbf{Runtime and Practicality.} Tab. \ref{tab:time} reports the end-to-end cost of SEA-PEFT, covering the full search–audit–allocate process and a final refinement run. Across 1/5/10-shot settings, adaptation completes within 2.5–6.5 hours while consistently using only 0.2\% trainable parameters. This efficiency stems from SEA-PEFT’s online design: each audit cycle evaluates only a small subset of adapters, so compute scales smoothly with data rather than exploding with configuration space size. In practice, this yields high-quality configurations in hours, making SEA-PEFT realistically deployable in clinical environments where engineers are unavailable and model adaptation must fit within tight timelines.

\textbf{SEA-PEFT vs. Random Configurations.} To examine whether SEA-PEFT’s gains arise from principled selection rather than larger parameter counts, we run a controlled diagnostic on three representative organs, as shown in Fig. \ref{fig:rand}.
SEA-PEFT consistently selects configurations near the upper-left frontier-achieving higher Dice with comparable or fewer parameters. The gap widens as the number of shots increases: with more stable gradients, the audit signal becomes cleaner and SEA-PEFT towards stronger, non-redundant adapter subsets, while random configurations remain broadly scattered. The experiment confirms that SEA-PEFT’s improvements come from targeted, task-level utility estimation rather than sheer parameter volume. 

Fig. \ref{fig:viz} shows qualitative results across all eight organs. SEA-PEFT produces masks that more closely match the ground truth on small, low-contrast structures: duodenum and gallbladder are largely missed by fixed-topology PEFT but partially recovered by SEA-PEFT, while esophagus and left kidney (yellow arrows) show cleaner boundary delineation. On large, high-contrast organs such as liver and spleen, both methods perform comparably, consistent with the quantitative results in Tab. \ref{tab:totalseg}.

\vspace{-10pt}
\section{Conclusion}
We presented SEA-PEFT, a framework that automates PEFT training and configuration search for few-shot 3D medical segmentation by treating adapter selection as an online allocation problem. Each adapter's utility is estimated by on/off Dice perturbation on a validation set, smoothed via EMA and IQR filtering, and used to greedily allocate parameter budget across training. An FSM stabilizer prevents configuration chatter in noisy few-shot settings. On TotalSegmentator and FLARE'22, SEA-PEFT consistently outperforms fixed-topology PEFT baselines across 1/5/10-shot regimes using $<$1\% trainable parameters.

\bibliographystyle{splncs04}
\bibliography{ref}

\end{document}